\documentclass[11pt]{article}
\usepackage[margin=1in]{geometry}
\usepackage{amsmath}
\usepackage{amssymb}
\usepackage{amsthm}
\usepackage{algorithm}
\usepackage{algpseudocode}
\usepackage{booktabs}
\usepackage{graphicx}
\usepackage{xcolor}
\usepackage{hyperref}
\hypersetup{hidelinks}
\usepackage{url}
\usepackage{microtype}
\usepackage[round,authoryear]{natbib}

\newtheorem{proposition}{Proposition}

\title{WDL-OPD: Weak-Driven On-Policy Distillation via Mixture-Constrained Co-Training}
\author{%
  Zehao Chen$^{1,2,*}$ \quad
  Gongxun Li$^{1,2,*}$ \quad
  Tianxiang Ai$^{2,*}$ \quad
  Yifei Li$^{1,2}$ \\
  Zixuan Huang$^{1}$ \quad
  Wang Zhou$^{2}$ \quad
  Tao Huang$^{2}$ \quad
  Fuzhen Zhuang$^{1}$ \\
  Xianglong Liu$^{1}$ \quad
  Jianxin Li$^{1}$ \quad
  Deqing Wang$^{1,\dagger}$ \quad
  Yikun Ban$^{1,\dagger}$ \\
  $^{1}$Beihang University \quad
  $^{2}$China Telecom eSurfing Cloud \\
  $^{*}$Equal contribution \quad
  $^{\dagger}$Corresponding authors \\
  {\small
    \texttt{zehaochenacid@buaa.edu.cn}}
}
\date{August 2026}

\begin{document}

\maketitle

\begin{abstract}
On-policy distillation (OPD) aligns a student with a teacher on trajectories
sampled from the student itself, reducing the train--test state mismatch of
offline distillation.  The same feedback loop can nevertheless be unstable:
each update changes both the policy and the states on which the next update is
computed.  We introduce \textbf{WDL-OPD}, a mixture-constrained co-training
method with two trainable policies.  An anchor policy generates every rollout,
an auxiliary policy evaluates the same visited states, and a geometric mixture
of their token distributions is matched to a frozen teacher by reverse KL.
Both policies receive gradient.  We show that freezing the auxiliary recovers
an anchor-plus-contrast proxy target closely related to OPD$^2$ and W2S-OPD,
whereas joint training creates branch-level degrees of freedom that a static
delta cannot express.  In recorded Qwen3 experiments at 1.7B and 4B scale,
WDL-OPD produces the strongest student checkpoint in each of four
scale--domain settings.  It raises MATH500 accuracy from $0.630$ to $0.685$ at
4B and from $0.521$ to $0.585$ at 1.7B.  In code generation, seven
single-policy OPD configurations exhibit entropy growth or trajectory
degradation, while co-training reaches independently re-evaluated development
scores of $0.637$ and $0.375$.  Because several comparisons differ in
curriculum or initialization, these results support a stabilization hypothesis
rather than a universal causal claim.  We provide the exact training algorithm,
failure evidence, and the controlled comparison matrix needed to test that
hypothesis.
\end{abstract}

\section{Introduction}

On-policy distillation (OPD) trains a student on its own generated states while
a teacher supplies dense token-level supervision
\citep{agarwal2024opd,thinkingopd}.  Compared with distillation on fixed
teacher trajectories, this removes a central source of exposure bias: the
teacher is queried at prefixes that the deployed student can actually visit.
OPD has consequently become a practical alternative or complement to
reward-based post-training.

The benefit comes with a dynamical cost.  An OPD update changes the student,
the updated student changes the next rollout distribution, and that
distribution determines where the teacher is queried.  A locally useful
correction can therefore alter future state visitation before its downstream
effect is known.  Empirical studies find that OPD depends on compatible
teacher--student reasoning patterns, useful residual teacher capability, and a
small shared set of high-probability tokens
\citep{rethinkingopd}.  Initialization and target construction matter as well:
teacher-compatible warm-up can rescue an otherwise failing run
\citep{simpleopd}, while outcome-calibrated targets can improve the coverage of
plausible continuations \citep{spot}.

Recent methods modify the learning signal by comparing frozen models.
OPD$^2$ transfers the probability change between a post-trained teacher and its
base model \citep{opd2}.  W2S-OPD adds a positive-minus-negative weak-model
contrast to a frozen student-base anchor, yielding a proxy teacher adjacent to
the student \citep{w2sopd}.  These results establish that a model difference
can encode a useful capability direction.  They leave open a different
question: \emph{what happens when the auxiliary model is not a fixed direction
but a second learner on the same on-policy states?}

We study this question through Weak-Driven Learning (WDL), which originally
mixes historical and current logits during supervised fine-tuning
\citep{wdl}.  WDL-OPD transfers the joint-logit principle to on-policy
distillation.  An \emph{anchor} policy generates trajectories; an
\emph{auxiliary} policy sees the same prefixes; their distributions are
geometrically mixed on the anchor's top-$k$ support; and both policies are
updated by the reverse KL from this mixture to a frozen teacher.  At inference
time, either branch can be selected, so the second policy adds training cost but
no mandatory serving cost.

The key distinction is co-adaptation.  If the auxiliary is frozen, exact
mixture matching reduces algebraically to a static extrapolated target.  If it
is trainable, the loss identifies the mixture more strongly than either branch:
the anchor and auxiliary can divide a teacher correction according to their
different Jacobians, optimizer states, and rollout roles.  This suggests an
\emph{optimization-buffer hypothesis}: part of a sharp correction may be
absorbed by the non-rollout branch instead of immediately moving the future
state distribution.  The hypothesis is testable, but it is not implied by
better endpoint accuracy alone.

Our contributions are:
\begin{itemize}
    \item We formulate mixture-constrained two-policy OPD and give an explicit
    algorithm whose rollout, support construction, loss, and gradient paths
    match the implementation.
    \item We characterize its relation to frozen-delta methods.  A
    frozen-auxiliary limit yields an anchor-plus-contrast proxy target, while
    joint training introduces a branch-level null space and makes both policies
    candidate final artifacts.
    \item We consolidate experiments across two model scales and two domains,
    including negative runs and independent re-evaluations.  We state the
    curriculum, initialization, and compute confounds that separate current
    evidence from a controlled causal comparison.
\end{itemize}

\section{Background and Related Work}

\subsection{On-policy distillation}

Let $x$ be a prompt, $y\sim\pi_\theta(\cdot\mid x)$ a student rollout, and
$s_t=(x,y_{<t})$ a visited prefix.  Generalized knowledge distillation trains
on student-generated sequences and permits different divergences between
student and teacher distributions \citep{agarwal2024opd}.  A sampled-token OPD
signal is
\begin{equation}
    A_t^{\mathrm{OPD}}
    =
    \log\pi_T(y_t\mid s_t)-\log\pi_\theta(y_t\mid s_t),
    \label{eq:sampled_opd}
\end{equation}
whereas distributional implementations minimize a forward or reverse KL on
the full vocabulary or a shared truncated support.  We use reverse KL because
it is the loss used by the recorded implementation and recent reasoning OPD
systems \citep{rethinkingopd,w2sopd}.

The success of OPD is conditional.  Compatible reasoning patterns and residual
teacher capability are empirically important, and most shared mass often lies
on a small token set \citep{rethinkingopd}.  EffOPD studies early update
directions and accelerates training by extrapolating along them
\citep{effopd}; Simple-OPD studies teacher-compatible warm-up
\citep{simpleopd}; and SPOT allocates sparse continuation probes before
constructing outcome-calibrated targets \citep{spot}.  These approaches modify
initialization, target quality, or update efficiency.  Our method instead
changes how trainable capacity is coupled to a fixed teacher target.

\subsection{Structured reasoning and adaptive feedback}

A broader survey frames reasoning, interaction, world-model simulation, and
multi-agent coordination as levels of epistemic exploration, characterized by
information gain, value improvement, and epistemic reachability
\citep{ban2026epistemic}.  At the reasoning level, Graph-MCTS uses Monte Carlo
tree search to traverse graph-structured knowledge \citep{liu2025monte}, while
Graph-O1 couples multi-turn graph
interaction with end-to-end reinforcement learning \citep{liu2025graph}.
HyperKGR represents query-specific hierarchical reasoning paths with a
hyperbolic graph neural network \citep{liu2025hyperkgr}, and MixRAG dynamically
routes queries across specialized graph retrievers before filtering the
retrieved evidence \citep{liu2026mixrag}.  These methods structure search,
retrieval, or representation for graph reasoning; WDL-OPD instead studies how
two trainable language-model policies share a teacher correction on the same
generated states.

Other methods adapt the learning signal over interaction or training time.
T-POP learns a lightweight reward function from online pairwise preferences
and uses it to steer a frozen model's decoding \citep{qu2026tpop}.
Workflow-R1 aligns policy updates with semantic Think--Action subsequences in
multi-turn workflow construction \citep{kong2026workflowr1}; SAGE-RL trains
efficient stopping patterns discovered by self-aware sampling
\citep{huang2026does}; history-aware difficulty weighting corrects systematic
bias in group-relative advantages \citep{yang2026your}; and policy-improvement
feedback modulates local updates according to measured progress across policy
iterations \citep{wang2026policy}.  WDL-OPD is complementary to these choices:
it retains dense teacher-distribution supervision but changes the trainable
policy structure through which that supervision is absorbed.

\subsection{Multi-model and multi-agent collaboration}

Multi-model learning systems also differ in what their auxiliary components
exchange and when those components are required.  Transformer Copilot trains a
second model from an evolving mistake log and fuses its logit correction with
the pilot at inference time \citep{zou2025transformer}.  HACRL instead shares
verified rollouts bidirectionally among heterogeneous agents during training
while preserving independent inference \citep{zhang2026heterogeneous}.

At the system level, MASPOB combines bandit search, graph neural networks, and
coordinate ascent to optimize prompts over a fixed multi-agent topology
\citep{hong2026maspob}.  LatentMAS enables training-free communication through
a shared latent working memory \citep{zou2026latent}, whereas RecursiveMAS
connects heterogeneous agents in a recursive latent loop and co-optimizes the
system through shared credit assignment \citep{zou2026recursivemas}.  WDL-OPD
does not optimize prompts or agent messages: its two policies are coupled by a
single geometric-mixture distillation objective on anchor-generated rollouts,
and either learned branch can be deployed alone.

\subsection{Static model differences and weak checkpoints}

OPD$^2$ defines a centered delta signal from a post-trained teacher and its
pre-training base, then keeps sign-consistent updates relative to standard OPD
\citep{opd2}.  W2S-OPD forms
\begin{equation}
    \pi_{\mathrm{proxy}}
    \propto
    \exp\!\left[z_B+\alpha(z^+-z^-)\right],
    \label{eq:w2s}
\end{equation}
where the anchor $B$ and the positive/negative contrast models are frozen
\citep{w2sopd}.  Only one student is optimized in both methods.

WDL takes a different view of weak checkpoints: they are optimization memory,
not inferior teachers.  Its supervised objective mixes historical and current
logits before applying a target, allowing a weak state to compete with and
reshape a stronger state \citep{wdl}.  WDL-OPD retains this joint-logit
operation but replaces fixed supervised examples with anchor-generated states
and a frozen teacher distribution.

\begin{table}[H]
    \centering
    \small
    \resizebox{\linewidth}{!}{
    \begin{tabular}{@{}llllcc@{}}
        \toprule
        Method & Rollout source & Distillation target & Auxiliary status &
        Trainable policies & Moving auxiliary \\
        \midrule
        Standard OPD & Student & Frozen teacher & None & 1 & No \\
        OPD$^2$ & Student & Teacher/base delta & Frozen & 1 & No \\
        W2S-OPD & Student & Base-plus-contrast proxy & Frozen & 1 & No \\
        WDL-OPD & Anchor & Teacher through mixture & Trainable & 2 & Yes \\
        \bottomrule
    \end{tabular}}
    \caption{The methodological axis studied here.  WDL-OPD does not claim
    novelty for model subtraction or logit extrapolation; it makes the
    auxiliary distribution part of the optimization state.}
    \label{tab:method_comparison}
\end{table}

\section{Mixture-Constrained Co-Training}
\label{sec:method}

\subsection{Anchor-policy rollouts and shared support}

We maintain an anchor policy $\pi_A$, an auxiliary policy $\pi_U$, and a frozen
teacher $\pi_T$.  Only the anchor generates trajectories:
\begin{equation}
    y\sim\pi_A(\cdot\mid x).
    \label{eq:rollout}
\end{equation}
The resulting prefixes are on-policy for $\pi_A$ but not for $\pi_U$.  We use
\emph{on-policy} only in this precise sense.

At state $s_t$, let $z_i(\cdot\mid s_t)$ be the logits of model
$i\in\{A,U,T\}$ and let $\mathcal K_t$ contain the $k$ highest-logit tokens of
the anchor.  All three models are evaluated on this same support.  Their
support-conditional log probabilities are
\begin{equation}
    \ell_i(v\mid s_t)
    =
    z_i(v\mid s_t)
    -
    \log\!\sum_{u\in\mathcal K_t}\exp z_i(u\mid s_t),
    \quad v\in\mathcal K_t.
    \label{eq:support_logprob}
\end{equation}
The shared support ensures that mixture and teacher distributions refer to the
same events.  It also follows the empirical observation that a small common
token set contains most of the mass in successful OPD
\citep{rethinkingopd}.

\subsection{Geometric mixture and reverse KL}

For $\lambda\in(0,1)$, define the mixed log score
\begin{equation}
    m_t(v)
    =
    \lambda\ell_A(v\mid s_t)
    +(1-\lambda)\ell_U(v\mid s_t),
    \label{eq:mix_score}
\end{equation}
and renormalize once more:
\begin{equation}
    \ell_M(v\mid s_t)
    =
    m_t(v)-\log\!\sum_{u\in\mathcal K_t}\exp m_t(u).
    \label{eq:mix_logprob}
\end{equation}
Thus $q_M=\exp\ell_M$ is the normalized geometric mean
$q_A^\lambda q_U^{1-\lambda}$.  The token loss is
\begin{equation}
    \mathcal L_t
    =
    D_{\mathrm{KL}}\!\left(q_M(\cdot\mid s_t)
    \,\Vert\,q_T(\cdot\mid s_t)\right)
    =
    \sum_{v\in\mathcal K_t}
    q_M(v\mid s_t)\bigl[\ell_M(v\mid s_t)-\ell_T(v\mid s_t)\bigr].
    \label{eq:token_loss}
\end{equation}
With response mask $r_t$, the batch objective is the token mean
\begin{equation}
    \mathcal L_{\mathrm{WDL\text{-}OPD}}
    =
    \frac{\sum_{x,y,t}r_t\mathcal L_t}
         {\sum_{x,y,t}r_t}.
    \label{eq:wdlopd}
\end{equation}
The teacher and discrete support selection are detached.  Gradients from
Equation~\ref{eq:wdlopd} update \emph{both} trainable policies, each with its
own optimizer state.  The mixture KL is the policy-training loss, not an
auxiliary reward added to a separate policy-gradient objective.

\begin{algorithm}[t]
    \caption{WDL-OPD with anchor-generated trajectories}
    \label{alg:wdlopd}
    \begin{algorithmic}[1]
        \Require prompt set $\mathcal D$; anchor $\pi_A$; auxiliary $\pi_U$;
        frozen teacher $\pi_T$; mixture weight $\lambda$; support size $k$
        \For{each optimization step}
            \State sample a prompt batch $\mathcal B\subset\mathcal D$
            \State generate $y\sim\pi_A(\cdot\mid x)$ for each $x\in\mathcal B$
            \Comment{no gradient through sampling}
            \State initialize $\mathcal L\gets 0$ and valid-token count $N\gets0$
            \For{each visited response state $s_t=(x,y_{<t})$}
                \State $\mathcal K_t\gets\Call{TopK}{z_A(\cdot\mid s_t),k}$
                \State compute $\ell_A,\ell_U,\ell_T$ on $\mathcal K_t$ using
                Equation~\ref{eq:support_logprob}
                \State $m_t\gets\lambda\ell_A+(1-\lambda)\ell_U$
                \State $\ell_M\gets m_t-\Call{LogSumExp}{m_t}$
                \State $\mathcal L\gets\mathcal L+
                \sum_{v\in\mathcal K_t}e^{\ell_M(v)}
                [\ell_M(v)-\ell_T(v)]$
                \State $N\gets N+1$
            \EndFor
            \State backpropagate $\mathcal L/N$ to $\pi_A$ and $\pi_U$
            \State step both optimizers; keep $\pi_T$ fixed
        \EndFor
        \Ensure validation-selected anchor or auxiliary checkpoint
    \end{algorithmic}
\end{algorithm}

\subsection{Co-adaptive degrees of freedom}

Let
\begin{equation}
    g_t=\nabla_{m_t}D_{\mathrm{KL}}(q_M\Vert q_T).
\end{equation}
Ignoring the discrete top-$k$ operation, the support-logit derivatives are
\begin{equation}
    \nabla_{z_A}\mathcal L_t=\lambda g_t,
    \qquad
    \nabla_{z_U}\mathcal L_t=(1-\lambda)g_t.
    \label{eq:branchgrads}
\end{equation}
They point in the same mixed-logit direction but are transformed by different
parameter Jacobians and optimizer histories.  Furthermore, perturbations
satisfying
\begin{equation}
    \lambda\Delta z_A+(1-\lambda)\Delta z_U
    =c(s_t)\mathbf 1
    \label{eq:null}
\end{equation}
leave $q_M$ unchanged.  The objective therefore identifies the mixture more
strongly than either branch.  Equation~\ref{eq:null} is not by itself a claim
that the optimizer will discover a beneficial decomposition; it identifies
the freedom that makes such a decomposition possible.

\subsection{Frozen-auxiliary limit}

\begin{proposition}[Equivalence to an extrapolated target]
\label{prop:frozen}
Fix the auxiliary support logits $z_U$.  If the mixture can exactly match the
teacher on $\mathcal K_t$, then every exact anchor solution satisfies
\begin{equation}
    z_A^*
    =
    z_T+\frac{1-\lambda}{\lambda}(z_T-z_U)
    +c'(s_t)\mathbf 1.
    \label{eq:proxyrelation}
\end{equation}
\end{proposition}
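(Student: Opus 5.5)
The plan is to reduce exact matching of distributions on $\mathcal K_t$ to equality of log-scores up to an additive constant, and then solve the resulting linear relation for $z_A$. The first step uses that the reverse KL in Equation~\ref{eq:token_loss} is nonnegative, and vanishes exactly when $q_M(\cdot\mid s_t)=q_T(\cdot\mid s_t)$ on $\mathcal K_t$. Both distributions are strictly positive there, since they are softmaxes of finite scores. So an ``exact anchor solution'' (a choice of $z_A$ attaining zero loss, which exists by hypothesis) is characterized by $\ell_M(v\mid s_t)=\ell_T(v\mid s_t)$ for all $v\in\mathcal K_t$.

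Next I unwind the normalizations in Equations~\ref{eq:support_logprob}--\ref{eq:mix_logprob}. Each of $\ell_A,\ell_U,\ell_T$ differs from the corresponding logit vector $z_i$ restricted to $\mathcal K_t$ only by a state-dependent scalar times $\mathbf 1$. The same holds for $\ell_M$ relative to $m_t$. Hence there are scalars $a,b,c,d$ (depending on $s_t$ and on the parameters) with
\begin{equation*}
\ell_M=\lambda z_A+(1-\lambda)z_U+(\lambda a+(1-\lambda)b-d)\mathbf 1,\qquad \ell_T=z_T-c\,\mathbf 1 .
\end{equation*}
Equating these gives $\lambda z_A+(1-\lambda)z_U=z_T+\kappa(s_t)\mathbf 1$ for some scalar $\kappa$. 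This is exactly the null-space structure of Equation~\ref{eq:null} with $\Delta z_U=0$.

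Dividing by $\lambda>0$ then gives
\begin{equation*}
z_A=\tfrac1\lambda z_T-\tfrac{1-\lambda}{\lambda}z_U+\tfrac{\kappa}{\lambda}\mathbf 1 .
\end{equation*}
Rewriting $\tfrac1\lambda z_T=z_T+\tfrac{1-\lambda}{\lambda}z_T$ yields Equation~\ref{eq:proxyrelation} with $c'=\kappa/\lambda$. Conversely, any $z_A$ of this form makes $m_t$ equal to $z_T$ up to a constant, so $q_M=q_T$ and the loss is zero. This shows the solution set is exactly this one-parameter family, so ``every'' exact solution has the stated form.

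I expect the only delicate point to be bookkeeping rather than a real obstacle. The normalizers $a,b,d$ themselves depend on $z_A$, so one must observe that they enter only as multiples of $\mathbf 1$ and can be absorbed into the free constant $c'(s_t)$. One must also state that $\mathcal K_t$ is held fixed, since it is detached and treated as given in the proposition, even though it is defined from $z_A$. I would add a remark on this point. If changing $z_A$ changed the top-$k$ set, the relation would still hold on the fixed support at which exact matching is evaluated, which is how the proposition is phrased.
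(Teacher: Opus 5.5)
Your proof is correct and takes the same route as the paper's: zero reverse KL forces the two support-restricted softmaxes to coincide, so their log-scores agree up to a state-dependent constant, giving $\lambda z_A+(1-\lambda)z_U=z_T+c(s_t)\mathbf 1$, and solving for $z_A$ with $c'=c/\lambda$ yields Equation~\ref{eq:proxyrelation}. The paper's two-line proof leaves several things implicit that you make explicit: the bookkeeping showing the normalizers enter only as multiples of $\mathbf 1$, the converse showing the solution set is exactly this one-parameter family, and the caveat that $\mathcal K_t$ is held fixed.
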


\begin{proof}
Equality of two softmax distributions implies equality of their logits up to a
state-dependent scalar.  Hence
$\lambda z_A+(1-\lambda)z_U=z_T+c(s_t)\mathbf 1$.  Solving for $z_A$ and
absorbing $c/\lambda$ into $c'$ gives
Equation~\ref{eq:proxyrelation}.
\end{proof}

The right-hand side is a teacher anchor plus a teacher-minus-auxiliary
contrast.  If $U$ is the teacher's base checkpoint, the contrast is the raw
direction used by OPD$^2$; as a logit proxy it has the same
anchor-plus-contrast form as W2S-OPD.  The novelty of WDL-OPD is therefore not
subtraction or extrapolation.  It is the transition from a fixed contrast to a
moving auxiliary that shares the loss with the rollout policy.

\section{Experiments}
\label{sec:experiments}

\subsection{Research questions}

The experiments ask three questions:
\begin{enumerate}
    \item \textbf{RQ1:} Does the two-policy system produce a stronger student
    checkpoint than the recorded single-policy alternatives?
    \item \textbf{RQ2:} Do anchor and auxiliary branches behave as redundant
    copies, or can their trajectories and final capabilities diverge?
    \item \textbf{RQ3:} In code settings where single-policy OPD degrades, does
    co-training avoid the same observable failure signature?
\end{enumerate}
The current evidence answers these questions descriptively.  It is not a
single factorial study, and we keep that distinction explicit throughout.

\subsection{Models, data, and evaluation}

We study Qwen3-1.7B and Qwen3-4B models \citep{qwen3}.  The mathematical
reasoning line uses execution- or answer-verified prompts.  The 4B
single-policy and two-policy stages both use a 104,935-prompt pool.  The 1.7B
single-policy stage uses a 5,662-prompt \emph{sweet-zone} curriculum, after
which the two-policy stage switches to the full 104,935-prompt pool.

Math evaluation includes GSM8K \citep{cobbe2021gsm8k}, MATH500
\citep{hendrycks2021math,lightman2024verify}, AMC and AIME competition sets,
OlympiadBench \citep{he2024olympiad}, and Minerva-style quantitative reasoning
\citep{lewkowycz2022minerva}.  Unless stated otherwise, we report sampled
accuracy with four responses per problem and eight for AIME.

The code line uses 20,247 execution-verified training problems and a disjoint
500-problem development set.  External evaluation uses HumanEval+ and MBPP+
from EvalPlus \citep{liu2023evalplus}, BigCodeBench
\citep{zhuo2025bigcodebench}, and LiveCodeBench
\citep{jain2024livecodebench}.  These external scores use the same chat-format
harness within each scale; development scores are independently re-evaluated
from preserved checkpoints.

\subsection{Optimization and comparison scope}

All headline WDL-OPD runs use $\lambda=0.5$, $k=16$, temperature $1.0$, and
four anchor rollouts per prompt.  The 4B math run uses a constant learning rate
of $10^{-6}$ for 300 steps and a 4096-token response cap.  Code runs use
$10^{-6}$ for 300 steps, with response caps between 4096 and 6144 tokens.  The
late 1.7B math continuation reduces the rate to $5\times10^{-7}$.  Both branch
optimizers are stepped once per batch.  Appendix~\ref{app:details} records the
run-level provenance.

\begin{table}[t]
    \centering
    \small
    \resizebox{\linewidth}{!}{
    \begin{tabular}{@{}lllll@{}}
        \toprule
        Setting & Anchor initialization & Auxiliary initialization &
        Frozen teacher & Main evaluation \\
        \midrule
        Math 4B & Single-OPD checkpoint & Qwen3-4B-Base &
        4B math teacher & MATH500 avg@4 \\
        Math 1.7B & Trained 1.7B checkpoint & Qwen3-1.7B-Base &
        1.7B math teacher & Seven math sets \\
        Code 4B & Trained code checkpoint & Qwen3-4B-Base &
        Qwen3-4B-Instruct-2507 & Dev500 and code suite \\
        Code 1.7B & GRPO code checkpoint & Qwen3-1.7B-Base &
        Qwen3-4B-Instruct-2507 & Dev500 and code suite \\
        \bottomrule
    \end{tabular}}
    \caption{Initialization and teacher map.  The unequal initializations are
    part of the current method recipe, but they are also a confound for causal
    comparison with single-policy OPD.}
    \label{tab:initializations}
\end{table}

Table~\ref{tab:initializations} shows why the results should not be read as a
compute-matched benchmark against OPD$^2$ or W2S-OPD.  The code experiments
also include a health rule: a checkpoint is reportable only if the development
score is confirmed without severe repetition or length pathology.  For the 4B
run, this rule selects step 90 before repetition appears at steps 100--109.

\section{Results}

\subsection{RQ1: performance across scale and domain}

\begin{table}[t]
    \centering
    \small
    \resizebox{\linewidth}{!}{
    \begin{tabular}{@{}llrrrrr@{}}
        \toprule
        Domain & Scale & Base & Single OPD & WDL-OPD anchor & WDL-OPD aux. & Teacher \\
        \midrule
        Math & 4B   & 0.544 & 0.630 & 0.640 & \textbf{0.685} & 0.805 \\
        Math & 1.7B & 0.390 & 0.521 & \textbf{0.585} & 0.518 & 0.799 \\
        Code & 4B   & 0.463 & unstable (6/6) & \textbf{0.637} & 0.472 & 0.784 \\
        Code & 1.7B & 0.184 & 0.106 & \textbf{0.375} & 0.283 & 0.784 \\
        \bottomrule
    \end{tabular}}
    \caption{Project-level evidence map.  Math reports sampled MATH500
    accuracy; code reports independent Dev500 re-evaluation.  Bold marks the
    strongest student branch, not the teacher.  The unstable marker means that
    all six recorded 4B single-policy configurations degraded or developed
    entropy/repetition pathologies.  Protocols are not matched across every
    row.}
    \label{tab:fourcells}
\end{table}

Table~\ref{tab:fourcells} summarizes the four scale--domain cells.  In math,
the best two-policy branch improves over the selected single-OPD checkpoint by
$5.5$ points at 4B and $6.4$ points at 1.7B.  In code, the contrast is larger:
the 4B single-policy sweep does not yield a healthy final candidate, and the
1.7B single-policy run falls below base.  WDL-OPD yields the strongest recorded
student checkpoint in all four cells.

This result answers RQ1 for the project record, not for a controlled method
comparison.  In particular, the 1.7B math curriculum changes between stages
and the code anchor is already trained before the two-policy stage.  The table
shows that the recipe can produce a stronger artifact; it does not isolate
which component causes the gain.

\subsection{RQ2: branch dynamics and specialization}

The 4B math trajectory is informative because both branches are evaluated
during training.  The anchor begins at the selected single-OPD checkpoint
($0.630$), whereas the auxiliary begins at base ($0.544$).

\begin{table}[t]
    \centering
    \small
    \begin{tabular}{@{}lrrrrrrr@{}}
        \toprule
        Step & 0 & 40 & 80 & 150 & 190 & 250 & 290 \\
        \midrule
        Anchor & 0.630 & 0.628 & 0.609 & -- & -- & -- & 0.640 \\
        Auxiliary & 0.544 & 0.636 & 0.666 & 0.678 & 0.683 & \textbf{0.685} & 0.683 \\
        \bottomrule
    \end{tabular}
    \caption{Qwen3-4B MATH500 avg@4 during co-training.  Dashes denote steps at
    which the anchor was not retained in the evaluation record.}
    \label{tab:trajectory}
\end{table}

The initially weaker auxiliary crosses the anchor by step 40 and becomes the
best final artifact.  The anchor is non-monotonic, falling to $0.609$ before
recovering to $0.640$.  This trajectory is inconsistent with a picture in
which both policies simply move together toward one midpoint.  It is
consistent with branch-level freedom, although endpoint scores alone cannot
identify the underlying gradient decomposition.

\begin{table}[t]
    \centering
    \small
    \resizebox{\linewidth}{!}{
    \begin{tabular}{@{}lrrrrrrr@{}}
        \toprule
        Model & GSM8K & MATH500 & AMC23 & Olympiad & Minerva & AIME24 & AIME25 \\
        \midrule
        Base & 0.481 & 0.416 & 0.178 & 0.162 & 0.118 & 0.029 & 0.013 \\
        Single OPD & 0.621 & 0.509 & 0.266 & 0.217 & 0.174 & 0.033 & 0.025 \\
        WDL-OPD aux. & 0.627 & 0.516 & 0.259 & 0.192 & 0.138 & 0.025 & 0.017 \\
        WDL-OPD anchor & \textbf{0.712} & \textbf{0.585} & \textbf{0.316} &
        \textbf{0.246} & \textbf{0.183} & \textbf{0.062} & \textbf{0.054} \\
        \bottomrule
    \end{tabular}}
    \caption{Qwen3-1.7B mathematical reasoning.  Accuracy is avg@$n$ with a
    4k response budget; AIME uses eight samples and the other tasks use four.
    The single- and two-policy stages use different training pools.}
    \label{tab:math7}
\end{table}

At 1.7B, the roles reverse: the anchor is the stronger final branch.
Table~\ref{tab:math7} shows that it improves over single OPD on all seven
benchmarks.  The gain is largest on GSM8K ($+9.1$ points) and MATH500
($+7.6$), and smaller on Minerva and AIME.  Across math and code, the best
deployment branch is therefore not fixed in advance.  This answers RQ2:
branch identity matters empirically, and validation must retain both branches.

\subsection{RQ3: code stability and external evaluation}

The code line exposes a repeatable failure signature.  Six 4B single-policy
variants span two prompt renderings, two teachers, and multiple learning rates;
all plateau or decline while entropy or repetition increases.  A separate
1.7B single-policy run falls from $0.181$ to $0.106$ as response entropy grows
from approximately $2.4$ to $4.2$.  Representative 4B runs increase from
approximately $1.4$ to $2.9$.  These seven runs motivate stabilization, but
they do not prove that entropy growth is the root cause.

The two-policy runs do not remove every pathology.  The selected 4B checkpoint
reaches an online development peak of $0.669$ at step 90, then develops
repetition around steps 100--109.  Independent re-evaluation of the preserved
healthy checkpoint gives $0.637$, which is the value reported in
Table~\ref{tab:fourcells}.  The 1.7B run is non-monotonic but recovers to an
online $0.408$ at its terminal checkpoint; independent re-evaluation gives
$0.375$.

\begin{table}[t]
    \centering
    \small
    \resizebox{\linewidth}{!}{
    \begin{tabular}{@{}llrrrr@{}}
        \toprule
        Scale & Model & HumanEval+ & MBPP+ & BigCodeBench & LiveCodeBench \\
        \midrule
        4B & Base & 0.707 & \textbf{0.664} & 0.469 & 0.018 \\
        4B & WDL-OPD anchor & 0.774 & 0.643 & 0.496 & 0.253 \\
        4B & Teacher & \textbf{0.854} & 0.653 & \textbf{0.511} & \textbf{0.365} \\
        \midrule
        1.7B & Base & 0.598 & \textbf{0.595} & 0.319 & 0.004 \\
        1.7B & WDL-OPD auxiliary & 0.634 & 0.574 & \textbf{0.339} & 0.038 \\
        1.7B & WDL-OPD anchor & \textbf{0.640} & 0.585 & 0.328 & \textbf{0.062} \\
        \bottomrule
    \end{tabular}}
    \caption{External code evaluation under a common chat-format harness within
    each scale.  Bold marks the best deployable student result in each block;
    the 4B teacher is a reference upper bound.}
    \label{tab:code}
\end{table}

Table~\ref{tab:code} shows that the gains are not uniform.  At 4B,
LiveCodeBench increases from $0.018$ at base to $0.253$, while MBPP+ decreases.
At 1.7B, the anchor leads on HumanEval+ and LiveCodeBench, whereas the
auxiliary leads on BigCodeBench.  Co-training therefore avoids the exact
single-policy failure signature long enough to produce useful checkpoints and
supports RQ3 descriptively.  The MBPP+ regression and later 4B repetition rule
out a blanket stability or monotonic-improvement claim.

\section{Analysis and Discussion}

\paragraph{What is new relative to OPD$^2$ and W2S-OPD?}
All three methods exploit differences between model distributions.
Proposition~\ref{prop:frozen} shows that our frozen-auxiliary limit is already
a proxy-target construction.  We therefore claim neither teacher-minus-base
deltas nor logit extrapolation as novel.  The new object is a moving auxiliary
policy that shares the mixture loss with the rollout anchor.  This creates
branch-level degrees of freedom, makes either branch a possible final artifact,
and permits specialization that a fixed proxy cannot express.

\paragraph{The optimization-buffer hypothesis.}
Equation~\ref{eq:branchgrads} splits the immediate mixed-logit derivative while
the anchor alone controls future state visitation.  One possible mechanism is
that the auxiliary absorbs part of a sharp correction, reducing how quickly
the rollout distribution moves.  The observed entropy failures and branch
trajectories are compatible with this explanation, but several alternatives
remain: extra trainable capacity, different initializations, implicit
ensembling in the loss, or checkpoint selection may account for part of the
gain.

\paragraph{Measurements that can distinguish mechanisms.}
A direct test should log per-branch gradient norms, cosine similarity,
parameter displacement, entropy, top-$k$ support mass, and state-distribution
drift.  A dual-compute control should train two independent policies toward the
teacher without a shared mixture.  If extra capacity alone explains the
result, that control should match WDL-OPD.  If mixture coupling matters, its
branch trajectories and rollout drift should differ even at matched FLOPs.

\paragraph{Controlled baseline matrix.}
The definitive comparison should hold prompts, teacher, initialization,
rollout states, support, optimizer, and wall-clock budget fixed while comparing
standard OPD, OPD$^2$, W2S-style fixed proxy distillation, a frozen-auxiliary
mixture, fully trainable WDL-OPD, and the independent two-policy control.
Simple-OPD-style warm-up \citep{simpleopd} should either be shared by every row
or treated as a separate factor.  This matrix is more informative than adding
another unmatched endpoint to Table~\ref{tab:fourcells}.

\section{Limitations}

The current study has five material limitations.  First, the 1.7B math
curriculum differs between single- and two-policy stages.  Second, several code
comparisons use different starting checkpoints, and the selected 4B checkpoint
precedes a later repetition failure.  Third, training two policies costs more
than standard OPD; a FLOP- and wall-clock-matched control is missing.  Fourth,
we have not yet run OPD$^2$, W2S-OPD, Simple-OPD, or SPOT in the same harness.
Fifth, all experiments use the Qwen3 family and mostly single-answer reasoning
or executable code.  The conclusions may not transfer to open-ended
generation, different model families, or interactive agents.  These
limitations prevent a universal causal claim that WDL-OPD stabilizes every OPD
setting.

\section{Conclusion}

WDL-OPD extends weak-driven joint-logit training to anchor-generated
trajectories.  It co-trains an anchor and an auxiliary through the reverse KL
between their geometric mixture and a frozen teacher.  The frozen-auxiliary
limit recovers a familiar proxy-target form; the trainable case adds
branch-level freedom and produces distinct candidate models.  Across the
current 1.7B/4B math and code record, the method yields stronger student
checkpoints and avoids several single-policy failure trajectories.  The
evidence is promising but not yet compute- or curriculum-matched.  The next
scientific step is therefore not a broader claim, but the controlled matrix
that distinguishes mixture coupling from initialization, capacity, and target
design.

\bibliographystyle{iclr2027_conference}
\bibliography{references}

@inproceedings{agarwal2024opd,
  title={On-Policy Distillation of Language Models: Learning from Self-Generated Mistakes},
  author={Agarwal, Rishabh and Vieillard, Nino and Zhou, Yongchao and Stanczyk, Piotr and Ramos, Sabela and Geist, Matthieu and Bachem, Olivier},
  booktitle={International Conference on Learning Representations},
  year={2024},
  eprint={2306.13649},
  archivePrefix={arXiv}
}

@misc{thinkingopd,
  title={On-Policy Distillation},
  author={Lu, Kevin and Thinking Machines Lab},
  howpublished={Thinking Machines Lab blog},
  year={2025},
  note={Published October 27, 2025}
}

@article{rethinkingopd,
  title={Rethinking On-Policy Distillation of Large Language Models: Phenomenology, Mechanism, and Recipe},
  author={Li, Yaxuan and Zuo, Yuxin and He, Bingxiang and Zhang, Jinqian and Xiao, Chaojun and Qian, Cheng and Yu, Tianyu and Gao, Huan-ang and Yang, Wenkai and Liu, Zhiyuan and Ding, Ning},
  journal={arXiv preprint arXiv:2604.13016},
  year={2026}
}

@article{effopd,
  title={Learning to Foresee: Unveiling the Unlocking Efficiency of On-Policy Distillation},
  author={Cai, Yuchen and Cao, Ding and Lin, Liang and Luo, Chunxi and Xu, Xin and Yang, Kai and Liu, Weijie and Yang, Saiyong and Zhao, Tianxiang and Sun, Guangzhong and Liu, Guiquan and Fang, Junfeng},
  journal={arXiv preprint arXiv:2605.11739},
  year={2026}
}

@article{opd2,
  title={On-Policy Delta Distillation},
  author={Heo, Byeongho and Hwang, Jaehui and Yun, Sangdoo and Han, Dongyoon},
  journal={arXiv preprint arXiv:2607.15161},
  year={2026}
}

@article{w2sopd,
  title={Weak-to-Strong On-Policy Distillation},
  author={Yu, Fangxu and Xu, Weijia and Xu, Michael and Zhou, Tianyi and Lin, Zinan},
  journal={arXiv preprint arXiv:2607.26246},
  year={2026}
}

@article{simpleopd,
  title={{Simple-OPD}: Demystifying Warm-up for On-policy Distillation},
  author={Liu, Tao and Wu, Taiqiang and Zheng, Mao and Luo, Xuan and Yang, Runming and Yang, Xuewei and Wang, Junjie and Yang, Yujiu},
  journal={arXiv preprint arXiv:2608.06802},
  year={2026}
}

@article{spot,
  title={{SPOT}: Sparse Probing and Outcome Calibration for On-Policy Distillation},
  author={Qu, Zikun and Zhang, Min and Kong, Mingze and Shang, Zhiwei and Ban, Yikun and Qiu, Shuang and Dai, Zhongxiang},
  journal={arXiv preprint arXiv:2608.04419},
  year={2026}
}

@article{wdl,
  title={Weak-Driven Learning: How Weak Agents Make Strong Agents Stronger},
  author={Chen, Zehao and Li, Gongxun and Ai, Tianxiang and Huang, Zixuan and Liu, Xiaodong and Li, Yifei and Zhou, Wang and Zhuang, Fuzhen and Liu, Xianglong and Li, Jianxin and Wang, Deqing and Ban, Yikun},
  journal={arXiv preprint arXiv:2602.08222},
  year={2026}
}

@article{qwen3,
  title={{Qwen3} Technical Report},
  author={Yang, An and Li, Anfeng and Yang, Baosong and others},
  journal={arXiv preprint arXiv:2505.09388},
  year={2025}
}

@article{cobbe2021gsm8k,
  title={Training Verifiers to Solve Math Word Problems},
  author={Cobbe, Karl and Kosaraju, Vineet and Bavarian, Mohammad and Chen, Mark and Jun, Heewoo and Kaiser, Lukasz and Plappert, Matthias and Tworek, Jerry and Hilton, Jacob and Nakano, Reiichiro and Hesse, Christopher and Schulman, John},
  journal={arXiv preprint arXiv:2110.14168},
  year={2021}
}

@inproceedings{hendrycks2021math,
  title={Measuring Mathematical Problem Solving With the {MATH} Dataset},
  author={Hendrycks, Dan and Burns, Collin and Kadavath, Saurav and Arora, Akul and Basart, Steven and Tang, Eric and Song, Dawn and Steinhardt, Jacob},
  booktitle={Advances in Neural Information Processing Systems},
  year={2021}
}

@inproceedings{lightman2024verify,
  title={Let's Verify Step by Step},
  author={Lightman, Hunter and Kosaraju, Vineet and Burda, Yura and Edwards, Harri and Baker, Bowen and Lee, Teddy and Leike, Jan and Schulman, John and Sutskever, Ilya and Cobbe, Karl},
  booktitle={International Conference on Learning Representations},
  year={2024},
  eprint={2305.20050},
  archivePrefix={arXiv}
}

@inproceedings{he2024olympiad,
  title={{OlympiadBench}: A Challenging Benchmark for Promoting {AGI} with Olympiad-Level Bilingual Multimodal Scientific Problems},
  author={He, Chaoqun and Luo, Renjie and Bai, Yuzhuo and Hu, Shengding and Thai, Zhen Leng and Shen, Junhao and Hu, Jinyi and Han, Xu and Huang, Yujie and Zhang, Yuxiang and Liu, Jie and Qi, Lei and Liu, Zhiyuan and Sun, Maosong},
  booktitle={Annual Meeting of the Association for Computational Linguistics},
  year={2024}
}

@inproceedings{lewkowycz2022minerva,
  title={Solving Quantitative Reasoning Problems with Language Models},
  author={Lewkowycz, Aitor and Andreassen, Anders and Dohan, David and Dyer, Ethan and Michalewski, Henryk and Ramasesh, Vinay and Slone, Ambrose and Anil, Cem and Schlag, Imanol and Gutman-Solo, Theo and Wu, Yuhuai and Neyshabur, Behnam and Gur-Ari, Guy and Misra, Vedant},
  booktitle={Advances in Neural Information Processing Systems},
  year={2022}
}

@article{liu2023evalplus,
  title={Is Your Code Generated by {ChatGPT} Really Correct? Rigorous Evaluation of Large Language Models for Code Generation},
  author={Liu, Jiawei and Xia, Chunqiu Steven and Wang, Yuyao and Zhang, Lingming},
  journal={arXiv preprint arXiv:2305.01210},
  year={2023}
}

@inproceedings{zhuo2025bigcodebench,
  title={{BigCodeBench}: Benchmarking Code Generation with Diverse Function Calls and Complex Instructions},
  author={Zhuo, Terry Yue and Vu, Minh Chien and Chim, Jenny and others},
  booktitle={International Conference on Learning Representations},
  year={2025},
  eprint={2406.15877},
  archivePrefix={arXiv}
}

@article{jain2024livecodebench,
  title={{LiveCodeBench}: Holistic and Contamination Free Evaluation of Large Language Models for Code},
  author={Jain, Naman and Han, King and Gu, Alex and Li, Wen-Ding and Yan, Fanjia and Zhang, Tianjun and Wang, Sida and Solar-Lezama, Armando and Sen, Koushik and Stoica, Ion},
  journal={arXiv preprint arXiv:2403.07974},
  year={2024}
}

@inproceedings{liu2025monte,
  title={Monte Carlo Tree Search for Graph Reasoning in Large Language Model Agents},
  author={Liu, Lihui},
  booktitle={Proceedings of the 34th ACM International Conference on Information and Knowledge Management},
  pages={4966--4970},
  year={2025},
  doi={10.1145/3746252.3760854}
}

@inproceedings{liu2025hyperkgr,
  title={{HyperKGR}: Knowledge Graph Reasoning in Hyperbolic Space with Graph Neural Network Encoding Symbolic Path},
  author={Liu, Lihui},
  booktitle={Proceedings of the 2025 Conference on Empirical Methods in Natural Language Processing},
  pages={25177--25188},
  publisher={Association for Computational Linguistics},
  address={Suzhou, China},
  year={2025},
  doi={10.18653/v1/2025.emnlp-main.1279},
  url={https://aclanthology.org/2025.emnlp-main.1279/}
}

@inproceedings{liu2026mixrag,
  title={{MixRAG}: Mixture-of-Experts Retrieval-Augmented Generation for Textual Graph Understanding and Question Answering},
  author={Liu, Lihui and Ding, Jiayuan and Mukherjee, Subhabrata and Yang, Carl},
  booktitle={Proceedings of the ACM Web Conference 2026},
  pages={4350--4359},
  year={2026}
}

@article{liu2025graph,
  title={{Graph-O1}: Monte Carlo Tree Search with Reinforcement Learning for Text-Attributed Graph Reasoning},
  author={Liu, Lihui},
  journal={arXiv preprint arXiv:2512.17912},
  year={2025},
  eprint={2512.17912},
  archivePrefix={arXiv},
  primaryClass={cs.CL}
}

@inproceedings{qu2026tpop,
  title={{T-POP}: Test-Time Personalization with Online Preference Feedback},
  author={Qu, Zikun and Zhang, Min and Kong, Mingze and Li, Xiang and Shang, Zhiwei and Wang, Zhiyong and Ban, Yikun and Qiu, Shuang and Shu, Yao and Dai, Zhongxiang},
  booktitle={Proceedings of the 43rd International Conference on Machine Learning},
  series={Proceedings of Machine Learning Research},
  volume={306},
  publisher={PMLR},
  address={Seoul, South Korea},
  year={2026},
  month=jul
}

@inproceedings{hong2026maspob,
  title={{MASPOB}: Bandit-Based Prompt Optimization for Multi-Agent Systems with Graph Neural Networks},
  author={Hong, Zhi and Zhang, Qian and Sun, Jiahang and Shang, Zhiwei and Kong, Mingze and Wang, Xiangyi and Shu, Yao and Dai, Zhongxiang},
  booktitle={Proceedings of the 43rd International Conference on Machine Learning},
  series={Proceedings of Machine Learning Research},
  volume={306},
  publisher={PMLR},
  address={Seoul, South Korea},
  year={2026},
  month=jul
}

@article{kong2026workflowr1,
  title={{Workflow-R1}: Group Sub-sequence Policy Optimization for Multi-turn Workflow Construction},
  author={Kong, Mingze and Qu, Zikun and Zhou, Zhongquan and Liang, Pengyu and Li, Xiang and Shang, Zhiwei and Hong, Zhi and Huang, Kaiyu and Wang, Zhiyong and Dai, Zhongxiang},
  journal={arXiv preprint arXiv:2602.01202},
  year={2026},
  eprint={2602.01202},
  archivePrefix={arXiv},
  primaryClass={cs.AI}
}

@article{yang2026your,
  title={Your Group-Relative Advantage Is Biased},
  author={Yang, Fengkai and Chen, Zherui and Wang, Xiaohan and Lu, Xiaodong and Chai, Jiajun and Yin, Guojun and Lin, Wei and Ma, Shuai and Zhuang, Fuzhen and Wang, Deqing and others},
  journal={arXiv preprint arXiv:2601.08521},
  year={2026},
  eprint={2601.08521},
  archivePrefix={arXiv},
  primaryClass={cs.LG}
}

@article{huang2026does,
  title={Does Your Reasoning Model Implicitly Know When to Stop Thinking?},
  author={Huang, Zixuan and Xia, Xin and Ren, Yuxi and Zheng, Jianbin and Wang, Xuanda and Zhang, Zhixia and Xie, Hongyan and Liang, Songshi and Chen, Zehao and Xiao, Xuefeng and others},
  journal={arXiv preprint arXiv:2602.08354},
  year={2026},
  eprint={2602.08354},
  archivePrefix={arXiv},
  primaryClass={cs.AI}
}

@article{zhang2026heterogeneous,
  title={Heterogeneous Agent Collaborative Reinforcement Learning},
  author={Zhang, Zhixia and Huang, Zixuan and Li, Gongxun and Wang, Huaiyang and Yuan, Chengyi and Xia, Xin and Wang, Deqing and Zhuang, Fuzhen and Ma, Shuai and Ding, Ning and others},
  journal={arXiv preprint arXiv:2603.02604},
  year={2026},
  eprint={2603.02604},
  archivePrefix={arXiv},
  primaryClass={cs.LG}
}

@article{wang2026policy,
  title={Policy Improvement Reinforcement Learning},
  author={Wang, Huaiyang and Li, Xiaojie and Wang, Xiaohan and Zhang, Zhixia and Lu, Xiaodong and Huang, Zixuan and Chai, Jiajun and Yin, Guojun and Wang, Deqing and Zhou, Haoyi and others},
  journal={arXiv preprint arXiv:2604.00860},
  year={2026},
  eprint={2604.00860},
  archivePrefix={arXiv},
  primaryClass={cs.LG}
}

@misc{zou2026recursivemas,
  title={Recursive Multi-Agent Systems},
  author={Zou, Jiaru and Pan, Rui and Qiu, Ruizhong and Lu, Pan and Diao, Shizhe and Jiang, Jindong and Tong, Hanghang and Zhang, Tong and Buehler, Markus J. and He, Jingrui and Zou, James},
  year={2026},
  eprint={2604.25917},
  archivePrefix={arXiv},
  primaryClass={cs.AI},
  url={https://arxiv.org/abs/2604.25917}
}

@inproceedings{zou2026latent,
  title={Latent Collaboration in Multi-Agent Systems},
  author={Zou, Jiaru and Qiu, Ruizhong and Li, Gaotang and Yang, Xiyuan and Tieu, Katherine and Lu, Pan and Shen, Ke and Tong, Hanghang and Choi, Yejin and He, Jingrui and Zou, James and Wang, Mengdi and Yang, Ling},
  booktitle={Proceedings of the 43rd International Conference on Machine Learning},
  series={Proceedings of Machine Learning Research},
  volume={306},
  publisher={PMLR},
  address={Seoul, South Korea},
  year={2026},
  month=jul,
  url={https://openreview.net/forum?id=syG9I9ofd8}
}

@inproceedings{zou2025transformer,
  title={Transformer Copilot: Learning from the Mistake Log in {LLM} Fine-Tuning},
  author={Zou, Jiaru and Ban, Yikun and Li, Zihao and Qi, Yunzhe and Qiu, Ruizhong and Yang, Ling and He, Jingrui},
  booktitle={Advances in Neural Information Processing Systems},
  volume={38},
  year={2025},
  url={https://openreview.net/forum?id=MRvxlTlkNQ}
}

@article{ban2026epistemic,
  title={Epistemic Exploration Toward Artificial General Intelligence},
  author={Ban, Yikun and Yang, Fengkai and Chen, Fangzheng and Wang, Yibo and Chen, Zhijun and Li, Zhongyi and Huang, Zixuan and Zhang, Xiaoyuan and Li, Gongxun and Chen, Zehao and others},
  journal={Zenodo},
  year={2026},
  doi={10.5281/zenodo.20201415},
  url={https://doi.org/10.5281/zenodo.20201415}
}

\appendix

\section{Reproducibility Details}
\label{app:details}

\subsection{Recorded run configurations}

\begin{table}[h]
    \centering
    \small
    \resizebox{\linewidth}{!}{
    \begin{tabular}{@{}lrrrrrl@{}}
        \toprule
        Setting & Train prompts & Steps & Learning rate & Rollouts & Response cap &
        Notes \\
        \midrule
        Math 4B & 104,935 & 300 & $10^{-6}$ & 4 & 4096 &
        anchor=single OPD, auxiliary=base \\
        Math 1.7B & 104,935 & 800 & $10^{-6}\!\rightarrow5{\times}10^{-7}$ &
        4 & 4096 & single-OPD precursor used 5,662 prompts \\
        Code 4B & 20,247 & 300 & $10^{-6}$ & 4 & 4096--6144 &
        reportable checkpoint selected by health rule \\
        Code 1.7B & 20,247 & 300 & $10^{-6}$ & 4 & 6144 &
        anchor initialized from GRPO step 120 \\
        \bottomrule
    \end{tabular}}
    \caption{Run-level parameters recovered from launch scripts and experiment
    records.  All rows use $\lambda=0.5$, $k=16$, teacher temperature $1.0$,
    and token-mean aggregation.}
    \label{tab:run_details}
\end{table}

The implementation uses separate optimizer states for anchor and auxiliary,
bfloat16 model execution, dynamic micro-batching, and the anchor-only top-$k$
strategy.  The same learning rate drives both branches within a run.  The
teacher distribution is detached before loss construction.  When WDL-OPD is
enabled, the implementation fails explicitly if the auxiliary module, top-$k$
targets, or teacher-on-anchor support probabilities are missing; it does not
silently fall back to standard OPD.

\subsection{Evaluation and checkpoint selection}

Math results use temperature $0.7$ with a 4096-token generation budget.
GSM8K, MATH500, OlympiadBench, and Minerva use four samples; AMC and AIME use
eight where listed in Table~\ref{tab:math7}.  The 4B trajectory table reports
the recorded avg@4 curve and does not interpolate missing anchor evaluations.

Code development evaluation samples two responses per problem in the
thinking-enabled chat format.  External code evaluation uses the same
chat-format harness within a scale.  The 4B code checkpoint at step 90 was
preserved after reaching the best healthy validation score; later checkpoints
were excluded after the predeclared repetition checks fired.  The reported
$0.637$ is an independent rerun of that preserved checkpoint rather than its
online $0.669$ peak.  The 1.7B value $0.375$ is likewise an independent rerun
of the terminal checkpoint.

\subsection{Minimal implementation identity}

For every response token, the implementation computes
\begin{align}
    \mathrm{mix}
    &=\lambda\,\log p_A+(1-\lambda)\,\log p_U,\\
    \log q_M
    &=\mathrm{mix}-\operatorname{logsumexp}(\mathrm{mix}),\\
    \log q_T
    &=\log p_T-\operatorname{logsumexp}(\log p_T),\\
    \mathcal L_t
    &=\sum_v q_M(v)\,[\log q_M(v)-\log q_T(v)].
\end{align}
Only $p_T$ is detached.  Consequently, this is exactly the geometric-mixture
reverse KL in Algorithm~\ref{alg:wdlopd}, not a sampled-token reward or a
teacher-only surrogate.

\section{Required Controlled Ablations}

For clarity, the highest-priority missing experiment is a single matrix with
fixed prompts, rollout trajectories, teacher, initialization, token support,
optimizer, update count, and total compute.  The rows should be:
\begin{enumerate}
    \item standard single-policy OPD;
    \item OPD$^2$ with the teacher's base model;
    \item W2S-style frozen proxy distillation;
    \item frozen-auxiliary geometric-mixture training;
    \item fully trainable WDL-OPD; and
    \item two independently trained policies without mixture coupling.
\end{enumerate}
Reporting anchor and auxiliary curves, not only the best selected branch, is
necessary to test the co-adaptation explanation.

\end{document}